\newcommand{\reals}{\ensuremath{\mbox{\bf R}}}
\newcommand{\preals}{\ensuremath{\reals_+}}
\newcommand{\ppreals}{\ensuremath{\reals_{++}}}
\newcommand{\class}[1]{\ensuremath{c_{#1}}}
\newcommand{\classset}{{C}}
\newcommand{\classsetsupport}{{C_{support}}}
\newcommand{\embdim}{\ensuremath{m}}
\newcommand{\gauss}{{N}}
\newcommand{\numclasses}{\ensuremath{{M}}}
\newcommand{\numclassessupport}{\ensuremath{{M_{support}}}}
\newcommand{\numnn}{\ensuremath{{k}}}
\newcommand{\Prob}{\mathrm{Prob}}
\newcommand{\ie}{{\it i.e.}}
\newcommand{\posdefset}[1]{{S}_{++}^{#1}}
\newcommand{\iverson}{I}
\newcommand{\iversonk}[2]{\iverson[{#1}={#2}]}
\newcommand{\ibelongk}[2]{\iverson[{#1} \in {#2}]}
\newcommand{\ordinal}[1]{${#1}$-th}
\newtheorem{theorem}{Theorem}
\newcommand{\prob}{p}
\newcommand{\probest}{\hat{\prob}}
\newcommand{\approptoinn}[2]{\mathrel{\vcenter{
  \offinterlineskip\halign{\hfil$##$\cr
    #1\propto\cr\noalign{\kern2pt}#1\sim\cr\noalign{\kern-2pt}}}}}
\DeclareMathOperator*{\argmax}{arg\,max}
\title{Calibrated neighborhood aware confidence measure for deep metric learning}
\author{%
  Maryna Karpusha \\
  Amazon\\
  \texttt{marynk@amazon.com} \\
  \And
  Sunghee Yun \\
  Amazon\\
  \texttt{sunyun@amazon.com} \\
    \And
  Istvan Fehervari \\
  Amazon\\
  \texttt{istvanfe@amazon.com} \\
}
\begin{document}

\maketitle

\begin{abstract}
Deep metric learning has gained promising improvement in recent years following the success of deep learning. It has been successfully applied to problems in few-shot learning, image retrieval, and open-set classifications. However, measuring the confidence of a deep metric learning model and identifying unreliable predictions is still an open challenge. This paper focuses on defining a calibrated and interpretable confidence metric that closely reflects its classification accuracy. While performing similarity comparison directly in the latent space using the learned distance metric, our approach approximates the distribution of data points for each class using a Gaussian kernel smoothing function. The post-processing calibration algorithm with proposed confidence metric on the held-out validation dataset improves generalization and robustness of  state-of-the-art deep metric learning models while provides an interpretable estimation of the confidence. Extensive tests on four popular benchmark datasets (Caltech-UCSD Birds,  Stanford Online Product, Stanford Car-196, and In-shop Clothes Retrieval) show consistent improvements even at the presence of distribution shifts in test data related to additional noise or adversarial examples.

\end{abstract}
Deep distance metric learning (DDML) aims to learn a deep learning model that maps arbitrary groups of data to a high-dimensional vector embedding space such that the representations of semantically similar items of the same class are closer than the representations of dissimilar items. Such models are applied for tasks such as near-duplicate detection~\cite{b6}, feature-based retrieval~\cite{b1, b2, b3, b4}, clustering~\cite{b5}, visual search~\cite{b7, b8}, etc. The advantage of DDML is that it can be used for challenging extreme multi-label classification problems where the data between classes are unbalanced, scarce, or the number of classes is very large~\cite{b9, b10, b22, extreme_softmax}. In contrast to the traditional classification approach, the goal of DDML is to learn the general concept of similarity between data items opposite to class-specific features. As a result, the trained model can generalize to new classes without retraining.

When the DDML model is trained, the class of a query example is predicted based on the distances to samples with known labels.  A set of examples with known labels used for prediction is known as support or gallery set. However, such approach does not provide a notion of model confidence in the prediction, and the model lacks of easily computable confidence measure that correlates well with the accuracy of the model. Moreover, DDML inherits the disadvantages from its underlying deep neural architecture. Deep neural networks are shown to be sensitive to small input perturbations. For example, natural changes in the data distribution like noise, blurring, and JPEG compression can lead to a significant decrease in the model performance~\cite{b34, b37, b51}. Consequently, one can even engineer such adversarial samples that are imperceptible to human observers, yet completely fool the model~\cite{b23, b25, b26, b27, b28}.

To address these problems, different confidence scores for deep learning models have been proposed~\cite{b27, b41, towards_nn_knows_when_fails, pac, distance-based-confidence, attributes-based-confidence}. However, they are either known to have the significant computational overhead or not applicable in metric learning settings. To overcome these limitations, we propose an approach to approximate the true correctness likelihood that can leverage learned spatial relations of similar and dissimilar items in the embedding space. In particular, we propose a novel confidence score called NED that calculates the Normalized sum of Exponential of the Distances to the nearest neighbors in the embedding space. We provide theoretical evidence derived from the Bayes' theorem that NED approximates the distribution of data points for each class by a Gaussian kernel smoothing function and calculates the conditional probability of a requested point belonging to a specific class. Similar to temperature scaling used to calibrate confidence in classification networks with softmax loss~\cite{b68}, our approach uses normalization to improve probability estimates of the true correctness likelihood. Only a single parameter is fitted to the held-out validation data. Therefore, unlike fitting the original neural network, the algorithm comes with generalization guarantees based on traditional statistical learning theory.

Even though in this paper we focus on the task of visual object classification as it is easier to analyze, our approach can be extended to other modalities of deep metric learning such as speech recognition~\cite{b58} or natural language processing~\cite{b59}.

The main contributions of this paper are the following:
\begin{itemize}
	\item We propose a novel confidence score called NED that enables the computation of reliable and interpretable confidence scores in the deep metric learning setting. We are mathematically motivated to use the proposed NED algorithm as an approximation of the data distribution for each class by a Gaussian kernel smoothing function.
	\item We present state-of-the-art results with our approach on four popular benchmark datasets Caltech-UCSD Birds,  Stanford Online Product, Stanford Car-196, and In-shop Clothes Retrieval using different state-of-the-art DDML models. We show that the robustness of DDML models with the proposed approach can be improved. The proposed NED confidence score improves model generalization to adversarial examples and more natural distortions in the data.
\end{itemize}

\section{Background}

\subsection{Deep Distance Metric Learning}
Suppose  a supervised training on given $N$ independent and identically distributed (i.i.d.) instances
$\{(x_{i}, y_{i})\}_{i=1}^N$ with $x_i \in \reals^n$ and $y_i \in \classset$ from unknown joint distribution of $x$ and $y$
where
$n$ is the dimension of the input space and
$\classset = \{ \class{1}, \ldots, \class{\numclasses}\}$ is the set of the labels in the training set.
The goal of DDML training is to learn a mapping function $f:\reals^n \to \reals^\embdim$
such that distances between sample pairs belonging to the same class in the embedding space
are smaller than
those between sample pairs belonging to the different classes.
To quantify the distance we define a function $d:\reals^\embdim \times \reals^\embdim \to \preals$
representing the distance in the embedding space where $\embdim$  is the dimension of the embedding space.
Then the mapping $z=f(x)$ with deep neural network model should satisfy the following relation:
\begin{equation}
d(z_i, z_j)
\ll
d(z_k, z_l)
\end{equation}
for all $1\leq i,j,k,l \leq N$ such that $y_i=y_j$ and $y_k\neq y_l$. Euclidean distance or cosine distance is typically used for $d$ in DDML models.

The performance of this mapping is measured by some loss function $L:\reals^\embdim \times \reals^\embdim \to \preals$. Early approaches were based on contrastive loss~\cite{contrastiveloss} and triplet loss~\cite{tripletloss}
where the loss function is defined on pairs or triplets of samples in order to minimize intra-class distances and maximize inter-class distances. Since computing the loss on every possible pair or triplet is intractable, recent approaches propose either better sampling strategies~\cite{b4, b16, hierarchical_triplet} or loss functions that consider the relationship of all samples within the training batch. Parametric models have also been proposed with the idea of storing some information about the global context ~\cite{b1, normproxies, ddml_deep_face_recognition}. Several model ensembles have also been explored, focusing on improving classification and retrieval performance using boosting~\cite{BIER} or attending diverse spatial locations~\cite{WonsikAttentionEnsemble}.

\subsection{Confidence in Deep Learning}

In optimization theory, a model is defined robust if it can perform well under a certain level of uncertainty~\cite{b50}.
However, it has been shown that neural networks are susceptible to small intentional or unintentional shifts in the data distribution~\cite{b37, b53}.
In real-world decision-making systems, it is important to indicate whether or not the prediction is reliable.
To achieve this, given a requested point $x\in\reals^n$,
we associate the confidence estimate $\hat{p}(x)$ for every class prediction $\hat{y}(x)$
where $\hat{y}:\reals^n \to \classset$ and $\hat{p}: \reals^n \to [0,1]$.
The confidence estimate $\hat{p}(x)$ is said to be calibrated if it represents the true probability of the correctness $p$~\cite{b68}.

As deep neural networks are shown to provide overconfident predictions, multiple post-processing steps were proposed to produce calibrated confidence measures.

\textbf{Calibration on the held-out validation data.} Different parametric and non-parametric approaches where the logits are used as features to learn a calibration model from a held-out validation data have been proposed~\cite{b68, confidence_bayesian_binning, confidence_histogram_binning, confidence_isotonic_regression}.  Simple a single-parameter variant of Platt
scaling~\cite{platt_scalling}, known as temperature scaling, is often an effective method at obtaining calibrated probabilities~\cite{b68, pac}.
The key insight is that in the temperature scaling approach, only a single parameter is fit to the validation data. Therefore,  unlike fitting the original neural network, the temperature scaling algorithm comes with generalization guarantees based on traditional statistical learning theory.

\textbf{Bayesian approximation.} Bayesian deep neural networks evaluate distributions over the models or their parameters. This approach proposes more accurate and tractable approximation of the uncertainty, but at the cost of expensive computation during training and inference~\cite{b42}. Some recent alternatives have been proposed to approximate predictive uncertainty. For example, Gal and Ghahramani propose considering dropout as a way of ensembling, which approximates Bayesian inference in deep Gaussian processes~\cite{b41}. Unfortunately, empirical results show that one needs to run inference at least 100 times to achieve accurate approximation, which can be infeasible in practice.

\textbf{Support set based uncertainty estimation.}  Papernot et al.~\cite{b20} introduce a trust score that measures the conformance between the classier and $k$-nearest neighbors on the support example. It enhances robustness to adversarial attacks and leads to better calibrated uncertainty estimates~\cite{b7}. Jiang et al.develop this idea further and compute the trust score on deeper layers of DNN than the input to avoid the high-dimensionality of inputs~\cite{trust_or_not}.

The proposed NED algorithm described in the next Section~\ref{sec:ned-section} can be considered as calibration on the support set. The algorithm is designed specifically for DDML models.

\section{NED Algorithm}
\label{sec:ned-section}
Our goal is to find prediction $\hat{y}$ and confidence $\hat{p}$ for requested point $x$ from test data given support set $\{(x_{i}, y_{i})\}_{i=1}^{N_{support}}$ and learned mapping function $z=f(x)$.  The classes between train and support sets can be disjoint if samples share same learned similarity concept.  $\classsetsupport = \{ \class{1}, \ldots, \class{\numclassessupport}\}$ is the set of  labels in support and test sets.

Since metric learning enables a similarity comparison directly in the embedding space using distance metric,
the simplest way to obtain a prediction $\hat{y}$ is to compare $z=f(x)$ to a set of samples from support set in the embedding space and pick the class of the nearest example. Then intuitively, we can assume that the magnitude of this distance can represent the confidence in the prediction.
However, such a distance is an unbounded positive value, and we need to calibrate it to obtain an interpretable probability estimate of the true correctness likelihood. Furthermore, outliers  in the support set can easily lead to misclassification since multiple neighbors are not considered.

To overcome this difficulty, we propose the NED confidence score. To show that it very closely approximates the true correctness likelihood,
we first propose the following theorem.
The proof can be found in Appendix~\ref{sec:ned-interpretation}.

\begin{theorem}
\label{theorem}
Given embedding $f:\reals^n \to \reals^\embdim$ and support set $\{(x_{i}, y_{i})\}_{i=1}^{N_{support}}$
where $x_i\in\reals^n$ and $y_i\in\classsetsupport$
the probability $p$ of $x\in\reals^n$ belonging to $\class{j} \in \classsetsupport$,
\ie, $y=\class{j}$, can be approximate by:
\begin{equation}
\label{eq:theorem}
\hat{p}(y\in \class{j}) \propto\
\frac
{\sum_{i=1}^{N_{support}} \exp\left( - \|z - z_{i}\|_2^2 /T \right) \ibelongk{y_i}{\class{j}}}
{\sum_{i=1}^{N_{support}} \exp\left( - \|z - z_{i}\|_2^2 /T \right)}.
\end{equation}
where $z_i = f(x_i)$ for $i=1,\ldots, N_{support}$ and
$T$ > 0 is a parameter to be tuned.
\end{theorem}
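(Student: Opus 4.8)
The plan is to derive the formula from Bayes' theorem by modeling, for each class $\class{j}$, the class-conditional density of embeddings $p(z \mid y = \class{j})$ via a Gaussian kernel density estimate built from the support points of that class. Concretely, I would place an isotropic Gaussian kernel of bandwidth controlled by $T$ at each support embedding $z_i$ with $y_i = \class{j}$, so that
\begin{equation}
p(z \mid y = \class{j}) \approx \frac{1}{N_j}\sum_{i=1}^{N_{support}} \frac{1}{(\pi T)^{\embdim/2}}\exp\!\left(-\|z - z_i\|_2^2 / T\right)\ibelongk{y_i}{\class{j}},
\end{equation}
where $N_j$ is the number of support samples in class $\class{j}$. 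This is the step where the theorem's phrase ``approximates the distribution of data points for each class by a Gaussian kernel smoothing function'' gets made precise; I would note that $T$ plays the role of (twice the square of) the kernel bandwidth, and that the choice of an isotropic Gaussian is natural because DDML training shapes the embedding space so that Euclidean distance is the meaningful metric.

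Next I would apply Bayes' theorem: $p(y = \class{j} \mid z) = p(z \mid y = \class{j})\,p(y = \class{j}) \big/ \sum_{l} p(z \mid y = \class{l})\,p(y = \class{l})$. I would take the prior $p(y = \class{j})$ proportional to $N_j$ (the empirical class frequency in the support set). Then the normalizing constants $(\pi T)^{-\embdim/2}$ cancel between numerator and denominator, the factor $1/N_j$ in the density cancels against the $N_j$ in the prior, and the denominator becomes $\sum_{l}\sum_i \exp(-\|z-z_i\|_2^2/T)\ibelongk{y_i}{\class{l}} = \sum_{i=1}^{N_{support}}\exp(-\|z - z_i\|_2^2/T)$ since every support point belongs to exactly one class. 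What remains is exactly the right-hand side of \eqref{eq:theorem}, and since $p(x \text{ belongs to } \class{j})$ is identified with $p(y = \class{j}\mid z)$ with $z = f(x)$, we get the claimed proportionality — in fact an equality under these modeling choices, with the ``$\propto$'' absorbing the option to rescale by a tuned $T$ or to renormalize, which is exactly the calibration step discussed in the text.

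The main obstacle is not the algebra — the cancellations are routine — but stating the modeling assumptions at the right level of rigor so that ``$\propto$'' and ``approximate'' are honest. I would be careful to (i) make explicit that the kernel density estimate is an approximation to the true class-conditional density, with $T$ as a smoothing hyperparameter rather than something derived from first principles; (ii) justify the choice of empirical prior $p(y=\class{j})\propto N_j$, or alternatively remark that a uniform prior over classes changes the formula only by per-class reweighting which the proportionality and subsequent calibration subsume; and (iii) observe that tuning the single parameter $T$ on held-out validation data is precisely analogous to temperature scaling, so the resulting estimate inherits generalization guarantees from statistical learning theory. A secondary subtlety worth a sentence is that the raw expression in \eqref{eq:theorem} is already normalized to $[0,1]$ and sums to one over $j$, so the ``$\propto$'' is really there to leave room for the post-hoc recalibration of $T$ rather than to hide an unknown constant.
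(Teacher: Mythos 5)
Your proposal is correct and follows essentially the same route as the paper's own proof: Bayes' theorem with the empirical class frequencies $N_j/N$ as priors and a Gaussian kernel density estimate of each class-conditional density $p(z\mid y\in\class{j})$, with the $N_j$ factors and kernel normalizing constants cancelling to leave the stated ratio. The only difference is that the paper first writes the kernel with a general per-class covariance $\Sigma_j$ and then specializes to the isotropic, equal-variance case $\Sigma_j=(T/2)I_\embdim$, whereas you assume the isotropic equal-bandwidth kernel from the outset; this does not change the substance of the argument.
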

\addtocounter{theorem}{-1}

When $T = 1$, the equation in (\ref{eq:theorem}) is a simple softmax function of the negatives of squares of the Euclidean distances.
However, similar to the bandwidth selection in kernel density estimation~\cite{b64},
we can control the degree of smoothing applied to the samples using $T$.
For example, as $T$ becomes larger, the relative influence of near samples becomes smaller.
On the other hand, if $T$ becomes smaller, the relative influence of near samples becomes larger.
Therefore the optimizing of $T$ is equivalent to finding $T$ which
provides the best relative distances
in terms of probability estimate for the true correctness likelihood.

We optimize $T$ with respect to the negative log likelihood on the support set instead of optimizing on held-out validation dataset. Because the parameter $T$ does
not change the relative magnitudes of the softmax function, we can preserve the spatial order of the nearest neighbors to the requested point $x$ in the embedding space.

Suppose that we chose $\{x_i\}_{i=1}^\numnn$  nearest neighbors with labels $\{y_i\}_{i=1}^\numnn$ for the confidence score calculation.
Because the rest of the data points are far enough from $x$ in the embedding space,
$\exp\left( - \|z - z_{i}\|_2^2/T \right)\approx 0$ for $i>\numnn$,
thus the probability in (\ref{eq:theorem}) becomes
\begin{eqnarray}
\hat{p}(y\in c_{j})
\label{eq:ned}
=
\frac
{\sum_{i=1}^\numnn  \exp\left(- \|z - z_{i}\|_2^2/T\right)\ibelongk{y_i}{\class{j}}}
{\sum_{i=1}^\numnn \exp\left(-\|z - z_{i}\|_2^2/T\right)}.
\label{eq:ned-conf-level}
\end{eqnarray}

The equation in (\ref{eq:ned}) defines the \emph{NED confidence score}.
Below we show the precise NED algorithm description.

\IncMargin{1em}
\begin{algorithm}[H]
	\SetAlgoLined
	\SetKwInOut{Input}{Inputs}\SetKwInOut{Output}{Output}
	\Input{query point $x$, DDML model $z=f(x)$, support set $\{(x_{i}, y_{i})\}_{i=1}^{N_{support}}$, optimal $T$. }
	\Output{prediction $\hat{y}$ with estimated confidence $\hat{p}$.}
	\BlankLine
	$\{z_i\}_{i=1}^{N_{support}}$ = $\{f(x_i)\}_{i=1}^{N_{support}}$   \{Obtain embedding vectors for samples in support set\}\;
	$z=f(x)$ \{Obtain embedding vector for query sample $x$\}\;
	find $k$ nearest neighbors $\{z_i\}_{i=1}^\numnn$ and their labels $\{y_i\}_{i=1}^\numnn$\;
	\For{$j\leftarrow 1$ \KwTo \numclassessupport}{
		$\hat{p}(y=y_{j})=
		\frac
		{\sum_{i=1}^\numnn  \exp\left(- \|z - z_{i}\|_2^2)/T\right)\iversonk{y_i}{y_{j}}}
		{\sum_{i=1}^\numnn \exp\left(-\|z - z_{i}\|_2^2)/T\right)}$\;
	}
	$\hat{p} = \max\limits_{1\leq j\leq \numclassessupport} \hat{p}(y=y_{j}) $\;
	$\hat{y} = y_{j^\ast}$ where $j^\ast = \argmax\limits_{1\leq j\leq \numclassessupport} \hat{p}(y=y_{j}) $\;
	\Return{$\hat{y}$, $\hat{p}$}
	\caption{NED algorithm}
	\label{alg:ned}
\end{algorithm}\DecMargin{1em}

As we show in section ~\ref{sec:evaluation}, one of the advantages of using such a post-processing algorithm is that it can be used to improve generalization and robustness of already trained DDML models. Moreover, it can be combined with known defenses like adversarial training~\cite{b53} or adversarial logit pairing~\cite{b55} to improve further adversarial robustness. Mao \textit{et al.}\ proposes adversarial training specially modified with deep metric learning settings~\cite{b56}. By carefully sampling examples for metric learning, the learned representation increases robustness to adversarial examples and help detect previously unseen adversarial samples.

\subsection{Comparison with $k$-nearest neighbor (kNN) and weighted kNN algorithms}

Non-parametric classifiers like $k$-nearest neighbors (kNN) and weighted $k$-nearest neighbors (WkNN) can also be used to improve the classification performance for already trained DMML models. However, as we show in section~\ref{sec:evaluation}, they provide non-calibrated confidence estimation. We use kNN and WkNN as baselines to compare with the proposed NED algorithm.

The kNN classifier is a simple non-parametric classifier that predicts the label of an input based on a majority vote from labels of the $k$ neighbors in the embedding space. Intuitively, the confidence score for every class can be selected as the percentage of nearest neighbors labels belonging to $c_j$ class:
\begin{equation}
\label{eq:knn_confident_neighbor}
\hat{p}(y \in c_{j}) = \frac{1}{k} \sum_{i=1}^k \ibelongk{y_i}{c_j}
\end{equation}

The robustness of kNN has already been shown from both theoretical perspectives and empirical analyses~\cite{b44, b46, b41, b47}. However,
the main disadvantage of kNN is that its reliability depends critically on the value of $k$.
Therefore many weighted $k$-nearest neighbor (WkNN) approaches have been proposed~\cite{b69, b70, b71} where the closer neighbors are weighted more heavily than the farther ones. We have experimented with various weighted approaches
and achieved the most reliable and calibrated estimation of the confidence score using an approach described in \cite{b70} and \cite{b69}.

In this case, if the distance-weighted function $w_{i}:\reals^n \to \preals$ is defined,  the confidence score can be selected as the weighted percentage of nearest neighbors belonging to $c_j$ class:
\begin{equation}
\label{eq:wknn_confident_neighbor}
\hat{p}(y \in c_{j})
= \frac{
	\sum_{i=1}^k  w_{i} \ibelongk{y_i}{c_j}
}{
	\sum_{i=1}^k  w_{i} }.
\end{equation}
For these methods, the weights $w_i$ are linear functions of the distance between $z$ and $z_i$, \ie, in the embedding space.

To compare the performance of NED algorithm with kNN and WkNN algorithm, we use equations~(\ref{eq:knn_confident_neighbor}) and~(\ref{eq:wknn_confident_neighbor}) to calculate $\hat{p}$ in algorithm~\ref{alg:ned}.

\section{Evaluation}
\label{sec:evaluation}

We consider the following scenarios to evaluate the performance of the proposed NED algorithm.

First, we train the state-of-the-art DDML model with the normalized, temperature-weighted version of the cross-entropy loss following the protocol described in~\cite{b73}. Minimizing the cross-entropy can be seen as an approximate bound-optimization algorithm for minimizing many popular in DDML pairwise losses~\cite{ddml_deep_face_recognition, ddml_multi_similarity, ddml_scalable_nca, tripletloss, hierarchical_triplet, contrastiveloss}. Therefore, we use this approach as a baseline that gives state-of-the-art results with simpler hyperparameter and sampling strategy. Empirical experiments with other popular DDML approaches can be found in Appendix~\ref{sec:results-ddmls}.
When DDML model is trained, we evaluate performance of the model complemented with Algorithm~\ref{alg:ned}.

Second, we empirically evaluate the robustness of our approach in the presence of the distribution shift in test data. In particular, we repeat our experiments when test data contains small common distortions (image transformations related to JPEG compressions, different illumination conditions, camera quality) or when test images are modified using adversarial white-box attacks.

\subsection{Metrics}
For all experiments, we use the following metrics:

\textbf{Accuracy.} We use an accuracy metric to evaluate reliability for challenging extreme multi-class classification problems. Classification accuracy is equivalent to the Recall@1 metric in image retrieval~\cite{b3}.

\textbf{Reliability Diagrams.} A reliability diagram such as the one shown in Figure~\ref{fig:reliability_diagrams}
is a visual representation of confidence metric calibration ~\cite{b72}.
They show the relationship between expected accuracy and confidence estimation.
To estimate the expected accuracy from finite samples, we split test data into $M$ bins. For each bin, the mean predicted confidence score is plotted against the true fraction of positive cases. Both metrics should be near the diagonal line if the model is well calibrated.

\textbf{Expected Calibration Error.} While the reliability diagram is a useful visual representation method of confidence calibration,
it does not show proportion of samples in each bin. Therefore, we use Expected Calibration Error (ECE) to evaluate calibration~\cite{b68} which is defined as:
\begin{equation}
\mathrm{ECE} = \frac{1}{N} \sum_{m=1}^M |B_{m}|  |\text{acc}(B_{m}) - \text{conf}(B_{m})|.
\end{equation}
where $N$ is the number of samples in test dataset,
$M$ is the number of the bins,
$B_m$ is the index set for the $m$th bin,
and $\text{acc}(B_{m})$ and $\text{conf}(B_{m})$
are the accuracy and average confidence for the $m$th bin respectively which are defined as:
\begin{equation}
\mathrm{acc}(B_{m})=\frac{1}{|B_{m}|}\sum_{i \in B_{m}} \ibelongk{\hat{y}_i}{\class{i}}
\mbox{ and }
\mathrm{conf}(B_{m})=\frac{1}{|B_{m}|}\sum_{i \in B_{m}} \hat{p}(y_i \in \class{i}).
\end{equation}

\subsection{Datasets}
We conduct our experiments using DDML models on four benchmark datasets: Caltech-UCSD Birds (CUB-200)~\cite{b11},  Stanford Online Product (SOP)~\cite{b3}, Stanford Car-196 (CARS196)~\cite{b12} and In-shop Clothes Retrieval~\cite{in-shop}.

We follow the common evaluation protocol for these datasets~\cite{b73}. In particular, the object categories between train and test sets are disjoint. This split makes the problem more challenging since deep networks can overfit to the categories in the train set and generalization to unseen object categories could be poor.

\section{Results}\label{ref:results}

Table~\ref{tab:clean_retreival_preformance} shows the performance of DDML model complemented with NED algorithm, and the performance comparison with three baselines kNN, WkNN~\cite{b70}, and WkNN~\cite{b69}.
We provide the accuracy reported in \cite{b73} and the accuracy of our experiment for the case the label of the first nearest neighbor in the embedding space is used for prediction, \ie, 1NN.
The difference in accuracy is caused by using different initialization parameters during training.
The results presented in the table demonstrate that the proposed approach using NED algorithm outperforms all the other approaches
in both accuracy and ECE.
For all experiments, the accuracy for the model with NED algorithm is higher at least by 0.3\% and at most by 7.3\% compared to 1NN. Similarly, using kNN and different versions of WkNN algorithm improves classification accuracy, which demonstrates that outliers to the training distribution can be identified more accurately at test time when more than the first neighbor is considered.

\begin{table*}[h]
	\begin{center}
		\scalebox{0.97}{
		\begin{tabular}{|c|c|c|c|c|c|c|c|c|}
			\hline
			& \multicolumn{2}{|c|}{\textbf{CUB-200}} & \multicolumn{2}{|c|}{\textbf{CARS196}} & \multicolumn{2}{|c|}{\textbf{SOP}}  & \multicolumn{2}{|c|}{\textbf{InShop}} \\
			\hline
			& Accuracy & ECE & Accuracy & ECE  & Accuracy & ECE & Accuracy & ECE  \\
			\hline
			\emph{1NN(reported)} & \emph{67.6} & - & \emph{89.1} & - & \emph{80.8} & - & \emph{90.6} & - \\
			\hline
			1NN & 67.2  & - & 89.1 & - & 81.2 & - & 90.9 & - \\
			\hline
			kNN & 73.8 & 9.2 & 90.1 & 10.0 & 81.2 & 24.4 & 90.9 & 32.5  \\
			\hline
			WkNN \cite{b70} & 74.3 & 20.5 & 91.3 & 12.2 & 81.2 & 5.2 & 91.0 & 17.8  \\
			\hline
			WkNN \cite{b69} & 74.3 & 20.7 & 91.3 & 12.3 & 81.2 & 5.2 & 91.1 & 17.9  \\
			\hline
			NED(proposed) & \textbf{74.9} &\textbf{2.4}  & \textbf{91.5} & \textbf{1.5} & 81.2 & \textbf{2.2} & \textbf{91.3} & \textbf{0.3} \\
			\hline
		\end{tabular}
	}
		\caption{The accuracy and ECE of proposed NED algorithm and the four baselines (1NN, kNN, WkNN~\cite{b70}, and WkNN~\cite{b69}). The best results of each column are shown in \textbf{boldface}. The proposed approach consistently outperforms the baseline methods for all experiments. }
		\label{tab:clean_retreival_preformance}
	\end{center}
\end{table*}

While kNN and WkNN provide more reliable predictions, their confidences do not represent the interpretable confidence of predictions. The ECE is significantly higher than for the proposed NED approach (at least by 3\% and at most by 31.2\%).

The SOP dataset contains very scarce data (from 2 to 12 images per class). While the proposed approach does not improve classification accuracy compare to baselines, it provides better calibrated confidences. The ECE of the proposed approach is 2.2\% whereas that of WkNN is 5.2\%.  This difference demonstrates that NED can be used for uncertainty estimation to detect cases when the model most probably misclassifies for even such scarce dataset like SOP.

Figure~\ref{fig:reliability_diagrams} shows reliability diagrams for CARS dataset.
We see that the kNN, WkNN~\cite{b70}, and WkNN~\cite{b70} tend to be overconfident in its predictions.
On the other hand, NED algorithm produces much better confidence estimation
at the cost of tuning a single parameter $T$.
Also, all the bins are well calibrated by NED algorithm.

\begin{figure}[th]
	\centering
	\begin{subfigure}{0.245\linewidth}
		\includegraphics[width=\linewidth]{./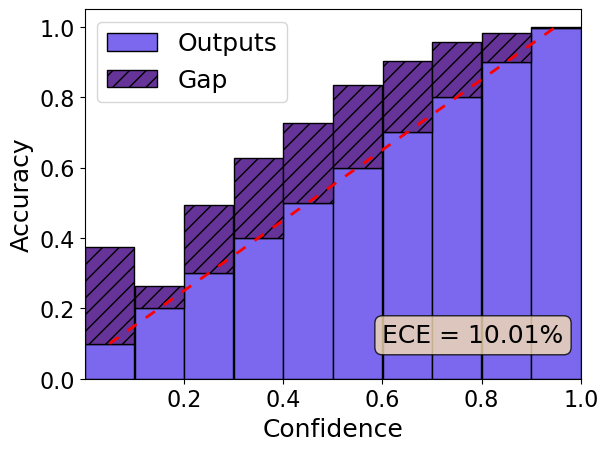}
		\caption{kNN}
	\end{subfigure}%
	\begin{subfigure}{0.245\linewidth}
		\includegraphics[width=\linewidth]{./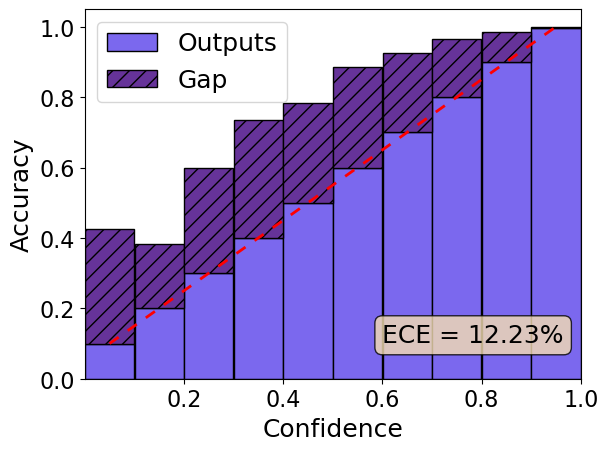}
		\caption{WkNN~\cite{b70}}
	\end{subfigure}
	\begin{subfigure}{0.245\linewidth}
		\includegraphics[width=\linewidth]{./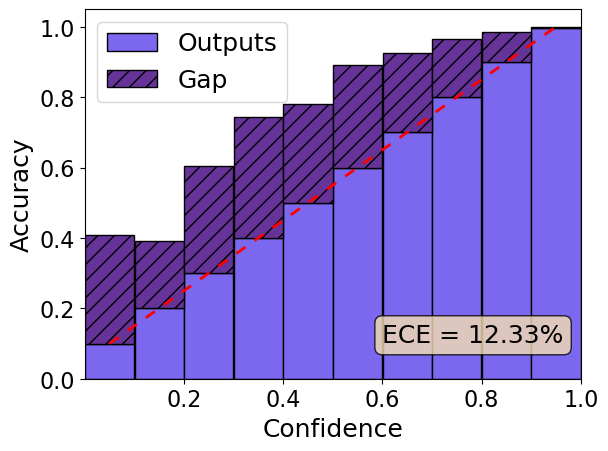}
		\caption{WkNN~\cite{b69}}
    \end{subfigure}
	\begin{subfigure}{0.245\linewidth}
		\includegraphics[width=\linewidth]{./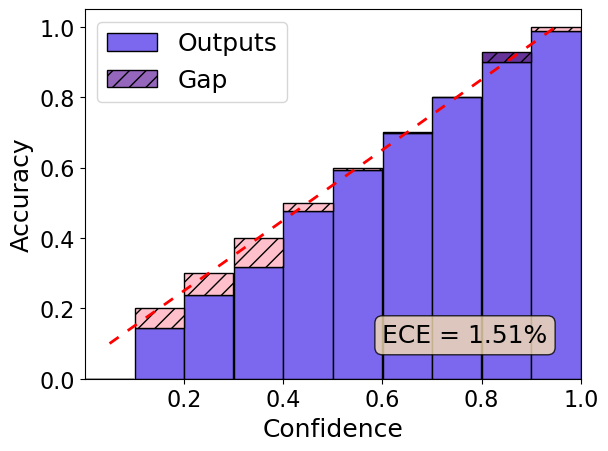}
		\caption{NED}
	\end{subfigure}
	\caption{Reliability diagrams for DDML model complemented with different approaches to estimate confidence of classification (CARS196 dataset). Our novel NED algorithm provides more accurate estimation of true correctness likelihood with the smallest expected calibration error (ECE).}
	\label{fig:reliability_diagrams}
\end{figure}

We analyze the impact of value of $k$ on the effectiveness of the proposed algorithm.
Figure~\ref{fig:k_preformance} shows the accuracy of the DDML model with different approaches based on the number of neighbors, $k$, used to detect the predicted label. Since the kNN method weights data points far from $z$ with the same importance as those close to $z$, its performance degrades with a larger $k$. Both variants of  WkNN are better than kNN because they consider the distances between $z$ and the neighbors $\{z_j\}_{j=1}^\numnn$. However, the weights they impose on samples are linear functions of the distances
whereas Theorem~\ref{theorem} shows that the correct weights should be exponential functions of the negative of the squared distance,
\ie,
$\exp(\|z - z_i\|_2^2/T)$,
hence should rapidly decrease as the distance increases.
This is why both kNN and WkNN show poor performance for large $k$s.

On the other hand, the performance of NED algorithm monotonically improves as $k$ grows.
This is due to Theorem~\ref{theorem},
\ie, (\ref{eq:theorem}) is the accurate value for the correctness likelihood
and (\ref{eq:ned-conf-level}) is a better approximation for (\ref{eq:theorem}) with a larger $k$.
Therefore, the performance of the NED algorithm should yield better results with larger $k$s.

Figure~\ref{fig:k_preformance} also shows a critical advantage of NED algorithm over the other methods.
The accuracy of NED algorithm does not vary much after a certain point.
Therefore NED is robust to the choice of $k$, which reduces the efforts of choosing $k$ considerably
whereas the other methods are sensitive to the choice of $k$ requiring much effort to find the optimal value for $k$.

This can also be explained by Theorem \ref{theorem}.
In (\ref{eq:ned-conf-level}), we can see  $\exp(-\|z - z_{i}\|_2^2)/T)$ become negligible for those points far from $z$ in the embedding space,
hence the additional accuracy gain obtained by increasing $k$ rapidly diminishes after a certain point.

\begin{figure}[h]
		\centering
	\begin{subfigure}{0.4\linewidth}
		\includegraphics[width=\linewidth]{./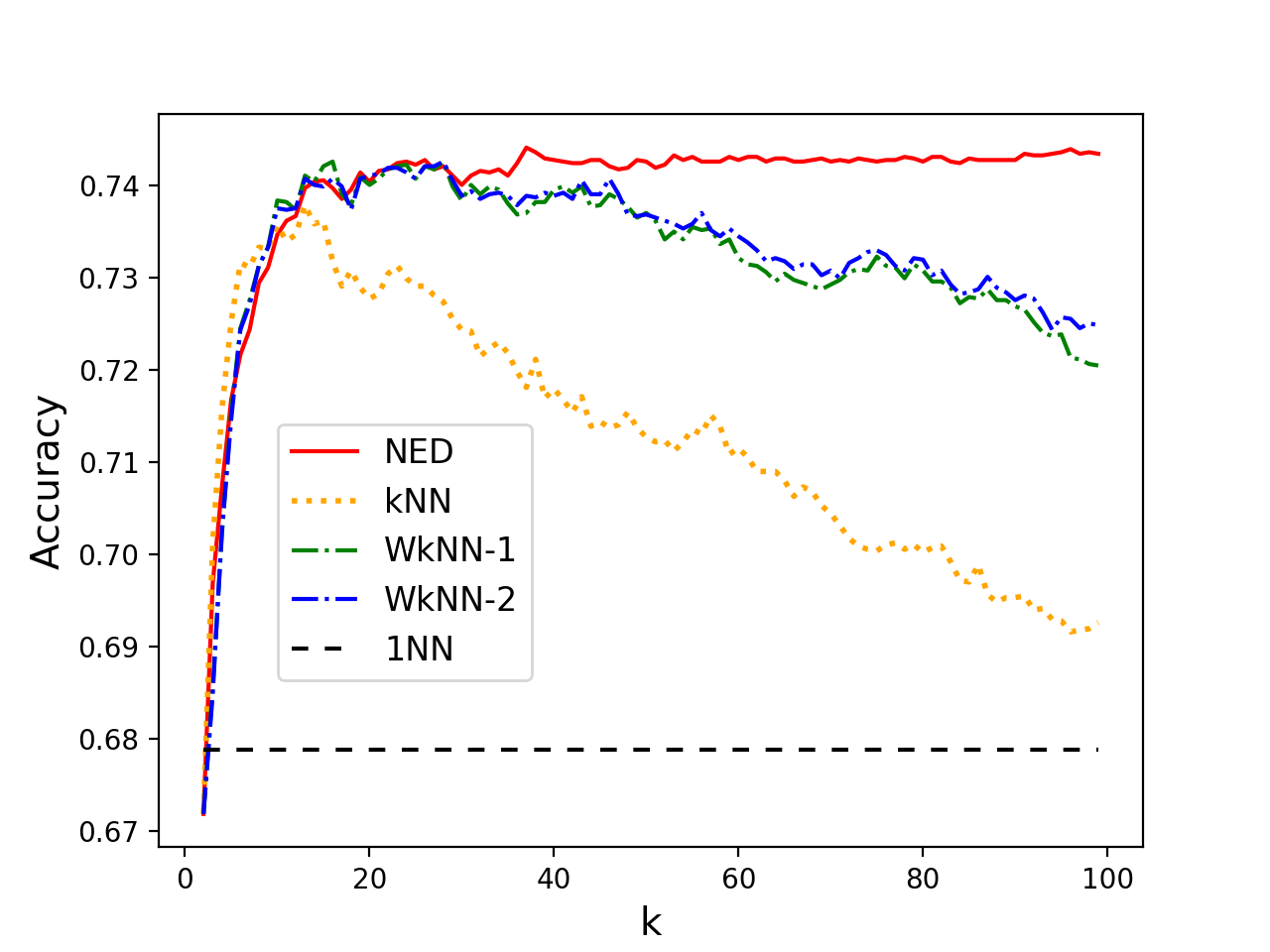}
		\caption{CUB}
	\end{subfigure}%
	\begin{subfigure}{0.4\linewidth}
		\includegraphics[width=\linewidth]{./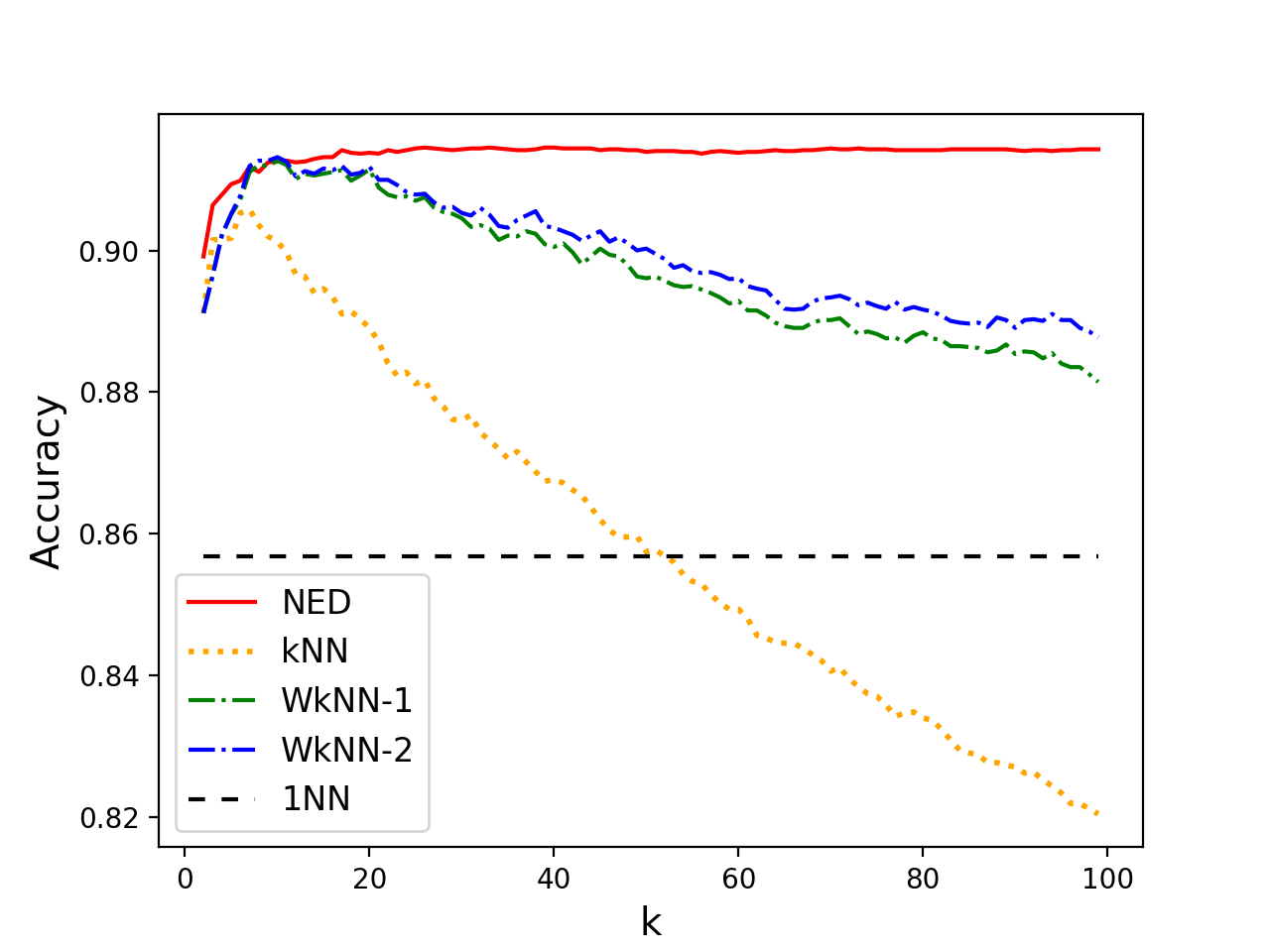}
		\caption{CARS}
	\end{subfigure}
	\begin{subfigure}{0.4\linewidth}
		\includegraphics[width=\linewidth]{./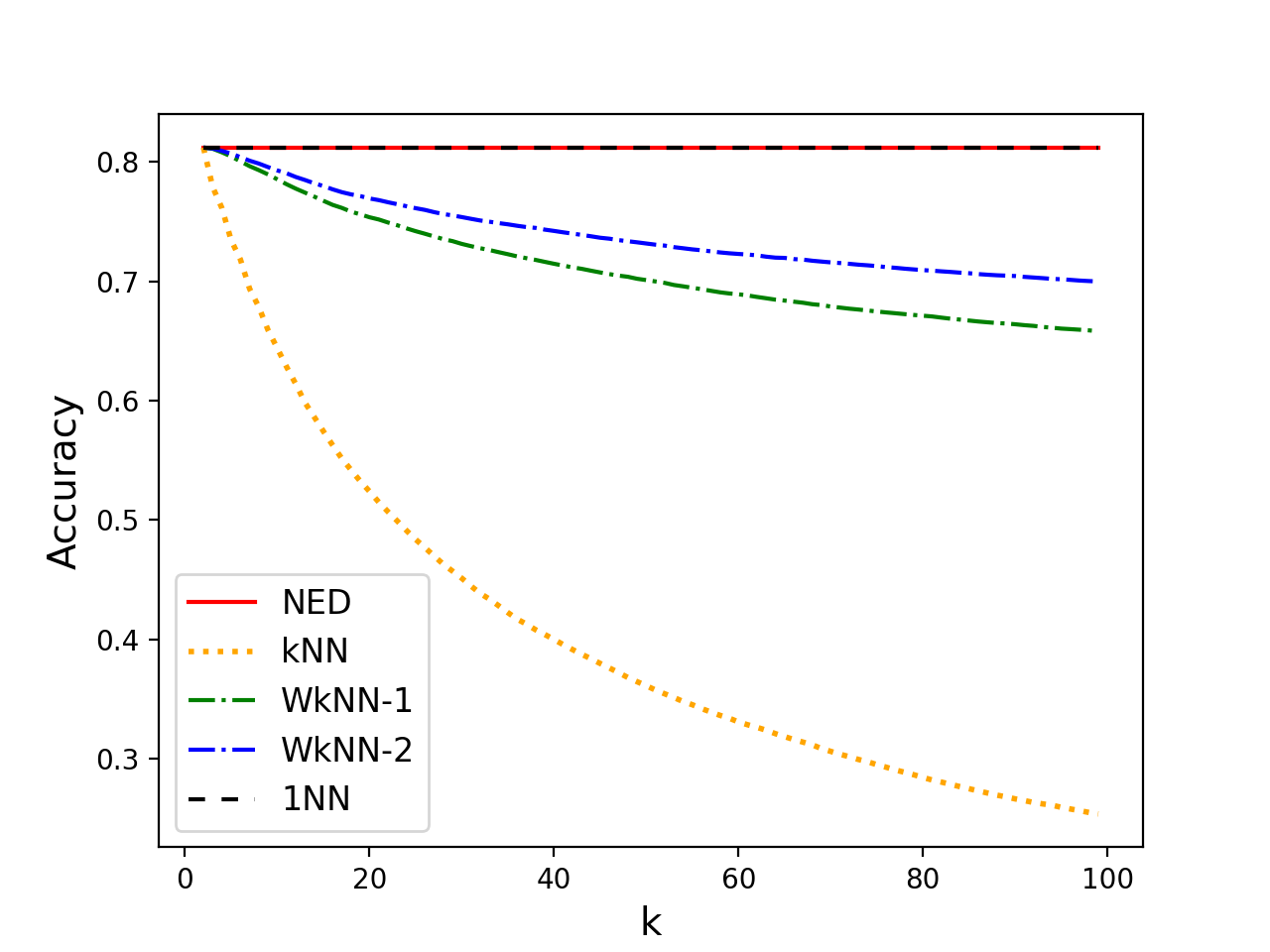}
		\caption{SOP}
	\end{subfigure}
	\begin{subfigure}{0.4\linewidth}
		\includegraphics[width=\linewidth]{./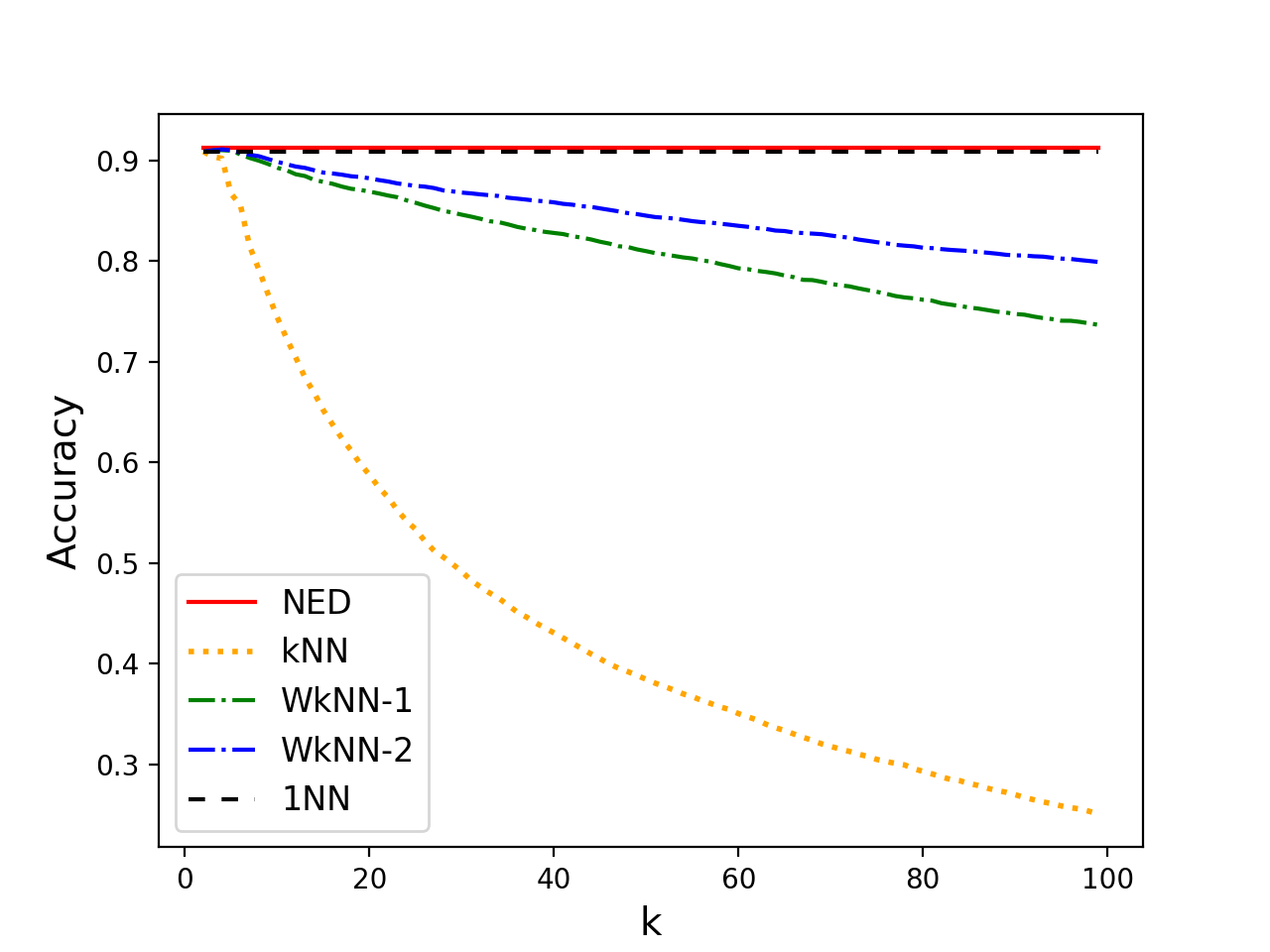}
		\caption{InShop}
	\end{subfigure}
	\caption{Accuracy of DDML model based on the $k$ value on CUB-200, CARS196,  SOP and InShop. }
	\label{fig:k_preformance}
\end{figure}

In Appendix~\ref{sec:ned-interpretation} 
we also provide the detailed interpretation of parameter tuning for $T$. Tuning $T$ corresponds to tuning the smoothing factor in the Gaussian kernel function. Hence we can interpret the tuning of $T$ as finding a better estimate for the true probability distribution function of the data.

\section{Distorted and Adversarial Images}
To evaluate the robustness of the proposed algorithm, we modified images in the test set with fifteen different types of corruption and five severity levels following the protocol described in~\cite{b36}.
Note that the support set was not altered and the value of $T$ is the same as that used for the former experiments without distortions.

Table~\ref{tab:distorted_preformance} shows the average accuracy and average ECE for DDML model completed with proposed NED algorithm and the four baselines (1NN, kNN, WkNN~\cite{b70}, and WkNN~\cite{b69}).
The results show that, on average, the model with the NED algorithm is more robust than the baselines when considering the types of image distortions.
In this case, the difference between the accuracy of model with NED algorithm and that of 1NN is even higher
compared to the results obtained on clean images,
\ie, the average accuracy for the model with NED algorithm is higher than that of 1NN by at least 1.6\% and at most 12\%.

\begin{table*}[h]
	\begin{center}
		\scalebox{0.98}{
		\begin{tabular}{|c|c|c|c|c|c|c|c|c|}
			\hline
			& \multicolumn{2}{|c|}{\textbf{CUB-200}} & \multicolumn{2}{|c|}{\textbf{CARS196}} & \multicolumn{2}{|c|}{\textbf{SOP}}  & \multicolumn{2}{|c|}{\textbf{InShop}} \\
			\hline
			& Accuracy & ECE & Accuracy & ECE  & Accuracy & ECE & Accuracy & ECE  \\
			\hline
			1NN & 52.8  & - & 72.9 & - & 74.1 & - & 80.2 & - \\
			\hline
			kNN & 60.5 & 12.3 & 76.0 & 14.1 & 75.1 & 27.3 & 81.8 & 34.4\\
			\hline
			WkNN \cite{b70} & 62.1 & 25.6  &77.0 & 16.8 & 75.2& 10.5 & 82.5 & 20.6  \\
			\hline
			WkNN \cite{b69} & 62.2 & 25.6 & 77.0 & 16.9 & 75.2& 10.5  & 82.5 &  20.6 \\
			\hline
			NED(proposed) & \textbf{64.8} &\textbf{3.0} & \textbf{77.2} & \textbf{2.3} &  \textbf{75.7}  & \textbf{2.9} & \textbf{85.6} & \textbf{0.9} \\
			\hline
		\end{tabular}
	}
		\caption{The average accuracy and average ECE for DDML model with proposed NED algorithm and the four baselines (1NN, kNN, WkNN~\cite{b70}, and WkNN~\cite{b69}) when images in test data are distorted with common corruption types~\cite{b36}.   The proposed approach consistently outperforms the baseline methods for all experiments.   }
		\label{tab:distorted_preformance}
	\end{center}
\end{table*}

To further examine the robustness of the models with the NED algorithm, we evaluate its performance on the adversarial examples. We craft adversarial samples using three popular white-box $L_\infty$ bounded untargeted attacks:
Fast Gradient Sign Method (FGSM)~\cite{b53},  Basic Iterative Method (BIM)~\cite{b57}, and Projected Gradient Descent (PGD)~\cite{b54}. We generate adversarial examples using PGD with 0.01 step size for 40 steps and using BIM with 0.02 step size for 10 steps during the training.

Table~\ref{tab:adversarial_preformance} shows the performance of DDML model with NED algorithm and the four baselines (1NN, kNN, WkNN~\cite{b70}, and WkNN~\cite{b69}) for CARS dataset. The proposed approach consistently outperforms baseline methods for all experiments.

\begin{table*}[h]
	\begin{center}
		\scalebox{0.95}{
		\begin{tabular}{|c|c|c|c|c|c|c|c|c|c|c|c|c|}
			\hline
			&\multicolumn{4}{|c|}{\textbf{\emph{FGSM}}} &\multicolumn{4}{|c|}{\textbf{\emph{BIM}}} &\multicolumn{4}{|c|}{\textbf{\emph{PGD}}} \\
			\hline
			& \multicolumn{2}{|c|}{$\epsilon = 0.1$}  &  \multicolumn{2}{|c|}{$\epsilon = 0.3$} &  \multicolumn{2}{|c|}{$\epsilon = 0.1$} &  \multicolumn{2}{|c|}{$\epsilon = 0.3$} &  \multicolumn{2}{|c|}{$\epsilon = 0.1$} &  \multicolumn{2}{|c|}{$\epsilon = 0.3$} \\
			\hline
			& A. & E. & A. & E.  & A. & E. & A. & E. & A. & E.  & A. & E.  \\
			\hline
			1NN & 86.3 & - & 77.7 & - & 85.7 & - & 69.5 & - &  86.4 & - & 74.4 & - \\
			\hline
			$k$NN & 87.5 & 10.4  & 79.6 & 9.8 & 87.3 & 10.8 & 71.8 & 9.6 & 87.8 & 10.6  & 76.1 & 10.9 \\
			\hline
			W$k$NN \cite{b70}& 88.6 & 12.7 & 80.9 & 12.7 & 88.4 & 13.2 & 73.0 & 13.1 & 88.9 & 12.8 & 77.7 & 14.5 \\
			\hline
			W$k$NN \cite{b69}& 88.6 & 12.9 & 80.9  & 12.9  & 88.4 & 13.3 & 73.2 & 13.2 & 88.9 & 12.9  & 77.7 & 14.5 \\
			\hline
			NED & \textbf{88.7} & \textbf{1.9} & \textbf{81.3} & \textbf{3.6} & \textbf{80.4} & \textbf{2.2}& \textbf{73.3} & \textbf{2.5} & \textbf{89.0} &  \textbf{2.0} & \textbf{78.0} & \textbf{2.9}  \\
			\hline
		\end{tabular}
	    }
		\caption{The accuracy (A.) and ECE (E.) for DDML model complemented with NED algorithm for adversarial images. The model with NED algorithm is more robust to adversarial images compared to the baselines.}
		\label{tab:adversarial_preformance}
	\end{center}
\end{table*}

\section{Conclusion}

In this paper we described a novel algorithm called NED that can be used with pre-trained DDML models
to improve the classification results while providing accurate approximation of true correctness likelihood.
We demonstrate the consistent performance improvement over the other baseline methods
for examples with normal data distribution,
those with distribution shifts related to common image corruptions,
and the adversarial examples.



{\small
	\bibliographystyle{plain}
	\bibliography{egbib}
}

\pagebreak

\appendix
\section{The proof of Theorem~\ref{theorem}}
In this appendix, we show that with the NED algorithm we can provide a good approximation of the true correctness likelihood assuming that the empirical distribution function obtained using the Gaussian kernel smoothing function well estimates the true distribution of data in each class.
Therefore we can not only show by experiments that NED score outperforms other methods, such as experiments with 1NN, kNN, WkNN~\cite{b70}, and WkNN~\cite{b69}, but also provide the theoretical ground to support the experimental results.

\label{sec:ned-interpretation}
\begin{theorem}
	\label{theorem}
	The probability $p$ of $y$ belonging to $\class{j} \in \classset$ given embedding $z=f(x)$ and support set $\{(x_{i}, y_{i})\}_{i=1}^{N}$ can be approximate by:
	\begin{equation}
	\label{eq:theorem-1}
	\hat{p}(y\in c_{j}) \propto\
	\frac
	{\sum_{i=1}^{N} \exp\left( - \|z - z_{i}\|_2^2 /T \right) \ibelongk{y_i}{\class{j}}}
	{\sum_{i=1}^{N} \exp\left( - \|z - z_{i}\|_2^2 /T \right)}.
	\end{equation}
	where $T$ > 0 is a parameter to be tuned.
\end{theorem}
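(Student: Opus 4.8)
The plan is to derive (\ref{eq:theorem-1}) directly from Bayes' theorem, replacing the unknown class-conditional density of the embeddings by a Gaussian kernel density (Parzen-window) estimate built from the support set. Write $q_j(z) \defeq p(z \mid y \in \class{j})$ for the density of $z = f(x)$ conditioned on class $\class{j} \in \classset$, and $\pi_j \defeq p(y \in \class{j})$ for the class prior. Bayes' rule gives
\[
p(y \in \class{j} \mid z) \;=\; \frac{q_j(z)\,\pi_j}{\sum_{l=1}^{\numclasses} q_l(z)\,\pi_l},
\]
so the whole problem reduces to estimating the numerator and denominator from the finite support set $\{(x_i,y_i)\}_{i=1}^N$.

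First I would model each $q_j$ by a spherical Gaussian kernel of common bandwidth $h>0$ centered at the support embeddings of that class,
\[
\hat q_j(z) \;=\; \frac{1}{N_j\,(2\pi h^2)^{\embdim/2}} \sum_{y_i \in \class{j}} \exp\!\left(-\frac{\|z - z_i\|_2^2}{2h^2}\right),
\]
where $N_j$ is the number of support points labeled $\class{j}$; this is the standard Parzen estimator, consistent for $q_j$ as $N_j\to\infty$ with $h$ shrinking at the usual rate. Next I would take the empirical class frequencies as the prior, $\hat\pi_j \defeq N_j/N$. The crucial bookkeeping step is that the factor $N_j$ then cancels: $\hat q_j(z)\,\hat\pi_j = \kappa \sum_{i=1}^N \exp(-\|z-z_i\|_2^2/(2h^2))\,\ibelongk{y_i}{\class{j}}$ with $\kappa \defeq 1/(N\,(2\pi h^2)^{\embdim/2})$ independent of $j$. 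Summing over $l$ for the denominator removes the indicator and gives $\sum_l \hat q_l(z)\,\hat\pi_l = \kappa \sum_{i=1}^N \exp(-\|z-z_i\|_2^2/(2h^2))$ with the same $\kappa$. Substituting into Bayes' rule and setting $T \defeq 2h^2$ yields exactly the right-hand side of (\ref{eq:theorem-1}); the constant $\kappa$ cancels, which is why the statement is written with $\propto$ and why any $j$-independent choice of prior or normalization (e.g.\ a uniform prior in place of empirical frequencies) leaves the formula unchanged, provided the bandwidth is class-independent.

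Finally I would record the interpretation of the free parameter: $T$ equals twice the squared kernel bandwidth, so tuning $T$ is exactly bandwidth selection for the density estimate, and optimizing $T$ against the negative log-likelihood on the support set is the same as choosing the smoothing that best fits the per-class densities, hence the best estimate of the correctness likelihood. This also clarifies the later comparison with kNN and WkNN: their weights, linear in the distance, are replaced here by the $\exp(-\|z-z_i\|_2^2/T)$ weights dictated by the density-estimation argument, which decay much faster.

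The step I expect to be the main obstacle is the rigor of the approximation itself. The Parzen estimator converges to $q_j$ only under regularity of the true densities and a bandwidth sequence $h=h(N_j)\to 0$ with $N_j h^{\embdim}\to\infty$, whereas in the algorithm $T$ is a single fixed tuned constant; the honest way to present the result is therefore as an approximation valid \emph{under the modeling assumption} that the Gaussian-kernel empirical density well estimates the true per-class density — precisely the framing of this appendix. A secondary subtlety worth one sentence is that a common bandwidth across classes together with the empirical-frequency prior are what make the $j$-independent constants cancel; class-dependent bandwidths would break that cancellation and change the displayed formula.
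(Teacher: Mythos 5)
Your proposal is correct and follows essentially the same route as the paper's proof: Bayes' rule with empirical class priors $N_j/N$, a Gaussian kernel (Parzen) estimate of each class-conditional density, cancellation of $N_j$ and of the $j$-independent normalizing constant, and the identification $T = 2h^2$ (the paper writes $\alpha = T/2$ after first allowing general covariances $\Sigma_j$ and then specializing to a common spherical one). Your added remarks on the common-bandwidth requirement and on the asymptotic caveats of the fixed-$T$ Parzen estimate are accurate and, if anything, make the assumptions more explicit than the paper does.
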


\begin{proof}
Suppose that there are \numclasses\ classes for multi-class classification problem
and the set of the classes is defined by $\classset = \{ \class{1}, \ldots, \class{\numclasses}\}$.
Let $X\in\reals^n$ and $Y\in\classset$ be
the random variables representing the independent variables and the associated class respectively
with the joint probability density function (PDF), $\prob_{X,Y}:\reals^n\times \classset \to \preals$.
Let $f:\reals^n\to\reals^\embdim$ represents the trained deep distance metric learning (DDML) model.
Then this defines the joint PDF of $Z=f(X)\in\reals^\embdim$ and $Y$, $\prob_{Z,Y}:\reals^\embdim\times\classset \to \preals$.
In general, we do not know these PDFs.
Here $n$ and $\embdim$ are the dimensions of the original and embedding spaces respectively.

The true confidence score of a data point $x\in\reals^n$ for \ordinal{i}\ class
is
the probability of $y$ belongs to \class{j}\ given $z=f(x)\in\reals^\embdim$ in the embedding space,
\ie,
\begin{equation}
\label{eq:true-conf-score}
\Prob(y\in \class{j}|z=f(x))
= \frac{\prob_{Z|Y}(z|y \in \class{j}) \prob_Y(y \in \class{j})}{\prob_Z(z)}
\end{equation}
where $\prob_Y: \classset \to \preals$ and $\prob_Z:\reals^\embdim \to\preals$
are the marginal PDF of $Y$ and $Z$ respectively
and $\prob_{Z|Y}:\reals^\embdim \times \classset \to \preals$
is the conditional PDF of $Z$ given $Y$.
Since we do not know the joint PDF $\prob_{Z,Y}$,
we cannot calculate any of these three quantities.
Here we estimate these quantities using the data we are given.

Let us assume that we are given $N$ data points,
\ie, $\{(x_{i}, y_{i})\}_{i=1}^N$ with $x_i \in \reals^n$ and $y_i \in \classset$
where these are the samples from $\prob_{X,Y}$
and that $N_j$ is the number of data points belonging to the \ordinal{j}\ class,
\ie,
$N_j = \sum_{i=1}^N \ibelongk{y_i}{\class{j}}$ for $j=1,\ldots,\numclasses$.
Since
\begin{equation}
\label{eq:conf-score-prop}
\Prob(y \in \class{j}|z)
\propto
\prob_{Z|Y}(z|y \in \class{j})
\prob_Y(y \in\class{j})
\end{equation}
we need to estimate only two quantities $\prob_Y(y \in\class{j})$ and $\prob_{Z|Y}(z|y \in \class{j})$.

For the first quantity, we can estimate it by counting the data points belonging to each class
assuming that the data set is balanced,
\ie, the size of data for each class is proportional to the true portion of each class.
Thus we have
\begin{equation}
\label{eq:approx-pi}
\probest_Y(y \in\class{j}) = N_j / N
\end{equation}
where $\probest_Y:\classset \to \preals$ denotes the estimate for the mass probability function (PMF) for $Y$,
\ie, $\prob_Y$.


For the second quantity,
we use a probability density estimation method.
If we knew the type of probability distribution the data of each class in the embedding space come from,
we could use some parametric estimation method
to estimate the conditional probability density, $\prob_{Z|Y}(z|y \in \class{j})$.
However, we do not know the details of the distribution in general.
Therefore we use a non-parametric distribution estimation method
which does not assume a specific type of data distribution.
One such method uses the empirical probability density function (PDF),
\ie,
\begin{equation}
\label{eq:emp-pdf-delta}
\probest_{Z|Y}(z|y \in \class{j})
= \frac{1}{N_j} \sum_{i:y_j \in \class{j}} \delta(z-f(x_{i}))
= \frac{1}{N_j} \sum_{i:y_j \in \class{j}} \delta(z-z_{i})
\end{equation}
where $\delta:\reals^n\to\{0,\infty\}$ is a Dirac delta function
whose range is in the set of the extended real numbers
and $\probest_{Z|Y,\cdot}$ denotes an estimate for $\prob_{Z|Y}$.
This estimated PDF has infinite peaks, hence we cannot use it in practice.
However, we can apply the convolution operation with some smoothing kernel to (\ref{eq:emp-pdf-delta})
to obtain finite PDF estimate,
\ie,
\begin{equation}
\probest_{Z|Y,\cdot}(z|y \in \class{j}) = \probest_{Z|Y,\delta}(z|y \in \class{j}) \star g(z)
= \frac{1}{N_j} \sum_{i:y_j \in\class{j}} g(z-z_{i})
\end{equation}
where $g:\reals^\embdim\to\preals$ is a kernel smoothing function and $\star$ is the convolution operator.
One typical choice for the kernel smoothing function is the (multi-dimensional) Gaussian PDF.
In this case, the estimated PDF becomes
\begin{equation}
\label{eq:approx-each-pdf}
\probest_{Z|Y,f_{\Sigma_j}}(z|y\in\class{j}) = \frac{1}{N_j} \sum_{i:y_j \in \class{j}} f_{\Sigma_j}(z-z_i)
\end{equation}
where $f_{\Sigma_j}:\reals^\embdim\to\ppreals$
is the PDF of $\gauss(0,\Sigma_j)$ for some $\Sigma_j\in\posdefset{\embdim}$, \ie, zero mean Gaussian with $\Sigma_j$ as its covariance matrix.
We can interpret the convolution with the Gaussian kernel
as applying \embdim-dimensional low-pass filter.
Here $\posdefset{\embdim}$ denotes the set of all positive definite matrices in $\reals^{\embdim\times \embdim}$
and the PDF is defined by
\begin{equation}
\label{eq:gauss-kernel}
f_{\Sigma_j}
= \frac{1}{(2\pi)^{\embdim/2}\det(\Sigma_j)^{1/2}} \exp\left(-\frac{1}{2} z^T \Sigma_j^{-1} z\right).
\end{equation}
Note that the PDF in (\ref{eq:gauss-kernel}) is not used for representing a random variable,
but as a kernel smoothing function.

Now the right hand side of (\ref{eq:conf-score-prop}) can be approximated by
\begin{equation}
\label{eq:approx-cond-prob}
\probest_Y(y \in \class{j})
\probest_{Z|Y,f_{\Sigma_j}}(z|y \in \class{j})
= \frac{1}{N} \sum_{i:y_i \in \class{j}} f_{\Sigma_j}(z-z_{i})
\end{equation}
where (\ref{eq:approx-pi}) and (\ref{eq:approx-each-pdf}) are used.
Note that the approximation in (\ref{eq:approx-cond-prob}) does not depend on $N_j$ because it is cancelled out.
This is because the number of data in each class, $N_j$, accounts for the probability for each class.

Now (\ref{eq:conf-score-prop}) implies that the true confidence score can be approximated by
\begin{eqnarray}
\nonumber
\Prob(y \in \class{j}|z=f(x))
&\approx&
\frac
{\probest_Y(y \in \class{j}) \probest_{Z|Y}(z|y \in \class{j})}
{\sum_{j'=1}^\numclasses \probest_Y(y \in \class{j'}) \probest_{Z|Y}(z|y \in \class{j'})}
= \frac
{\sum_{i:y_i \in \class{j}} f_{\Sigma_j}(z-z_{i})}
{\sum_{j'=1}^\numclasses\sum_{i:y_i\in\class{j'}} f_{\Sigma_{j'}}(z-z_{i})}
\\
&=&
\frac
{\sum_{i:y_i \in \class{j}} \exp\left( - (z-z_{i})^T \Sigma_j^{-1} (z-z_{i})/2 \right)/{\det(\Sigma_j)^{1/2}}}
{\sum_{j'=1}^\numclasses \sum_{i:y_i \in \class{j'}} \exp\left( - (z-z_{i})^T \Sigma_{j'}^{-1} (z-z_{i})/2 \right)/{\det(\Sigma_{j'})^{1/2}}}
\label{eq:approx-conf-score}
\end{eqnarray}
where we use the fact that $\sum_{j=1}^\numclasses \Prob(y \in \class{j}\mid z=f(x)) = 1$
for the derivation of (\ref{eq:approx-conf-score}) from (\ref{eq:approx-cond-prob}).


In our work, we use DDML models that assume during the training that all the coordinates of the embedding variable $Z$ are independent and properly normalized for each class. Therefore, we can replace  $\Sigma_j$ with $\alpha_j I_\embdim$ where $\alpha_j >0$ and $I_\embdim\in\posdefset{\embdim}$ denotes the identity matrix.
Then (\ref{eq:approx-conf-score}) becomes
\begin{equation}
\label{eq:approx-conf-score-identity-cov}
\Prob(y \in \class{j}|z=f(x))
\approx
\frac
{\sum_{i:y_i \in \class{j}} \exp\left( - \|z-z_{i}\|_2^2 /2\alpha_j \right)/{\alpha_j ^{\embdim/2}}}
{\sum_{j'=1}^\numclasses \sum_{i:y_i \in \class{j'}} \exp\left( - \|z-z_{i}\|_2^2 /2\alpha_{j'} \right)/{\alpha_{j'} ^{\embdim/2}}}
\end{equation}
where $\|\cdot\|_2$ denotes the $\ell_2$ norm.
If we further assume that the variance of the data in the embedding space is equal for all classes,
\ie, $\alpha_1 = \cdots = \alpha_\numclasses = \alpha > 0$,
we have
\begin{eqnarray}
\nonumber
\lefteqn{
\Prob(y \in \class{j}|z=f(x))
\approx
\frac
{\sum_{i:y_i \in \class{j}} \exp\left( - \|z-z_{i}\|_2^2 /2\alpha \right)}
{\sum_{j'=1}^\numclasses \sum_{i:y_i=\class{j'}} \exp\left( - \|z-z_{i}\|_2^2 /2\alpha \right)}
}
\\
&=&
\frac
{\sum_{i:y_i \in \class{j}} \exp\left( - \|z-z_{i}\|_2^2 /2\alpha \right)}
{\sum_{i=1}^N \exp\left( - \|z-z_{i}\|_2^2 /2\alpha \right)}
=
\frac
{\sum_{i=1}^N \exp\left( - \|z-z_{i}\|_2^2 /2\alpha \right) \ibelongk{y_i}{\class{j}}}
{\sum_{i=1}^N \exp\left( - \|z-z_{i}\|_2^2 /2\alpha \right)}.
\label{eq:approx-conf-score-same-cov}
\end{eqnarray}

If we replace  $\alpha$ with $T/2$, we obtain the equation~(\ref{eq:theorem-1}).
\end{proof}

\section{Experiments with different DDML approaches}
\label{sec:results-ddmls}
We evaluate the performance of the proposed NED algorithm on the following three additional DDML
models:

\textbf{Triplet Semihard.} We train a model with triplet loss and semi-hard sample
mining~\cite{b16}. We use GoogleNet~\cite{b60} with batch normalization
~\cite{b61} replacing the final dense layer to match the
embedding dimension before triplet loss. Due to batch-size
constraints, we omitted the experiments with this model on
the SOP dataset.

\textbf{N Pairs.}  Following the implementation details described in~\cite{b2},
we train a model with N pair loss, which is defined as a
softmax cross-entropy loss function on the pairwise distances
within each batch. The approach with N pair loss is known
to perform better than the triplet variant. Batch composition
strategy samples pairs of images from N unique classes. We
add one dense layer and normalization using L2 norm to
GoogleNet with batch normalization.

\textbf{Proxy NCA.} We train a model with proxy NCA loss~\cite{b1}. We replace
positive and negative samples with points that represent the
ideal cluster center of each class - proxies, which are initialized
randomly and learned along with the embedding function.
The model is not sensitive to the batch selection process
and converges faster. We add one dense layer and normalization
using the L2 norm to a pretrained Resnet50~\cite{b62}. Since
both proxies and embeddings are normalized and training
can stall when relative distances become very small, we add
a temperature parameter equal to 0.33 to the NCA loss.

The results presented in the Table~\ref{tab:best_retreival_preformance} demonstrates the potential of using NED algorithm with different DDML models.

\begin{table*}[h]
	\begin{center}
		\begin{tabular}{|c|c|c|c|c|c|c|}
			\hline
			\multicolumn{7}{|c|}{\textbf{\emph{CUB-200}}} \\
			\hline
			& \multicolumn{2}{|c|}{\textbf{Triplet Semihard}} & \multicolumn{2}{|c|}{\textbf{N Pairs}} & \multicolumn{2}{|c|}{\textbf{Proxy NCA}}  \\
			\hline
			& Accuracy & ECE & Accuracy & ECE  & Accuracy & ECE    \\
			\hline
			1NN(reported) & - & - & \emph{50.96} & - & \emph{49.21} & -  \\
			\hline
			1NN & 49.20 & - & 52.76 & - & 57.85 & -  \\
			\hline
			kNN & 55.93 & 4.40 & 59.53 & 10.21 & 63.47 & 8.64 \\
			\hline
			WkNN \cite{b70} & 56.51 & 13.65 & 59.81 & 12.50 & 63.51 &  18.20 \\
			\hline
			WkNN \cite{b69} & 56.51 & 13.65 & 59.93 & 13.01 & 63.51 &  18.20 \\
			\hline
			NED(proposed) & \textbf{56.78} &\textbf{3.62}  & \textbf{60.41} & \textbf{2.30} & \textbf{64.73} & \textbf{2.18}  \\
			\hline
			\multicolumn{7}{|c|}{\textbf{\emph{CARS196}}} \\
			\hline
			& \multicolumn{2}{|c|}{\textbf{Triplet Semihard}} & \multicolumn{2}{|c|}{\textbf{Npairs}} & \multicolumn{2}{|c|}{\textbf{Proxy NCA}} \\
			\hline
			& Accuracy & ECE & Accuracy & ECE  & Accuracy & ECE  \\
			\hline
			1NN(reported) & - & - & \emph{71.12} & - & \emph{73.22} & -  \\
			\hline
			1NN & 59.62 & 20.03  & 67.92 & 4.16 & 73.28 & - \\
			\hline
			kNN & 63.28 & 3.76 & 71.59 & 6.16  & 74.01  &  12.04  \\
			\hline
			WkNN \cite{b70}  & 64.26 & 8.95 & 72.60 & 16.78 & 75.68 &  17.04 \\
			\hline
			WkNN \cite{b69} & 64.26 & 8.95 & 73.01 & 16.00 & 75.68  & 17.04  \\
			\hline
			NED(proposed) & \textbf{64.32} & \textbf{3.73} & \textbf{73.15} & \textbf{2.57} & \textbf{76.29} & \textbf{2.33}   \\
			\hline
			\multicolumn{7}{|c|}{\textbf{\emph{SOP}}} \\
			\hline
			& \multicolumn{2}{|c|}{\textbf{Triplet Semihard}} & \multicolumn{2}{|c|}{\textbf{Npairs}} & \multicolumn{2}{|c|}{\textbf{Proxy NCA}}  \\
			\hline
			& Accuracy & ECE & Accuracy & ECE  & Accuracy & ECE   \\
			\hline
			1NN(reported) & - & - & \emph{67.73} & - & \emph{73.73} & -  \\
			\hline
			1NN & - & - & 68.38 & - &  74.09 & -  \\
			\hline
			kNN & - & - & 68.38 & 20.15 &74.09 &  23.24  \\
			\hline
			WkNN \cite{b70} & - & - &  69.45& 8.49 & 74.28  &  10.38  \\
			\hline
			WkNN \cite{b69} & - & - &   69.45 &9.01&74.28 &  10.39  \\
			\hline
			NED(proposed) & - & -  & \textbf{70.15} & \textbf{2.74} & 74.28 & \textbf{3.01}  \\
			\hline
		\end{tabular}
		\caption{The accuracy and ECE for DDML models trained with three different loss functions complemented with proposed NED algorithm and the four baselines (1NN, kNN, WkNN~\cite{b70}, and WkNN~\cite{b69}) on CUB-200, CARS196 and SOP datasets.  The best results of each column are in \textbf{bold}. The proposed approach consistently outperforms baseline methods for all experiments. }
		\label{tab:best_retreival_preformance}
	\end{center}
\end{table*}

\end{document}